\newtheorem{theorem}{Theorem}%[section]           %[section]   %numberes automatically
\newtheorem{definition}[theorem]{Definition}
\newcommand{\new}[1]{{#1}} %\textcolor{red}
\def\R{{\mathbb{R}}}
\DeclareMathOperator*{\argmax}{arg\,max}
\title{Demixing Structured Superposition Signals \\ from Periodic and Aperiodic Nonlinear Observations}
\name{Mohammadreza Soltani and Chinmay Hegde\thanks{
The first part of this manuscript appeared in the (non-archival) workshop paper \cite{NLDemix_NIPSworkshop}. This work is supported in part by NSF grants CCF-1566281 and IIP-1632116 and an NVIDIA GPU grant.
}
}
\address{ECpE Department, Iowa State University, Ames, IA, 50010}
\begin{document}
\ninept
\maketitle
\begin{abstract}

We consider the demixing problem of two (or more) structured high-dimensional vectors
from a limited number of nonlinear observations where this nonlinearity is due to either a periodic or an aperiodic function. We study certain families of structured superposition models, and propose a method which provably recovers the components given (nearly) $m = \mathcal{O}(s)$ samples where $s$ denotes the sparsity level of the underlying components. This strictly improves upon previous nonlinear demixing techniques and asymptotically matches the best possible sample complexity. We also provide a range of simulations to illustrate the performance of the proposed algorithms.
\end{abstract}
%
%\begin{keywords}
%One, two, three, four, five
%\end{keywords}
%
\section{Introduction}
\label{Intro}
%\vskip -0.1 in
\subsection{Motivation}
The \emph{demixing} problem involves disentangling two (or more) high-dimensional vectors from their linear superposition, and has several applications in signal and image processing, statistics, and data analysis~\cite{mccoyTropp2014,mccoy2014convexity,soltani2016fastTSP17,SoltaniHegde_Asilomar,SoltaniHegde_Globalsip}. In applications involving signal recovery, such superpositions can be used to model situations when there is some ambiguity in the components (e.g., the true components can be treated as ``ground truth'' + ``outliers'') or when there is some existing prior knowledge that the true underlying vector is a superposition of two components. Mathematically, suppose that the underlying signal is given by $\beta =\Phi\theta_1 + \Psi \theta_2$ where $\beta,\theta_1, \theta_2\in\mathbb{R}^n$ and $\Phi, \Psi$ are orthonormal bases. If a linear observation model is assumed, then given measurements $y\in\mathbb{R}^m$ and a design matrix $X \in \mathbb{R}^{m \times n}$, the goal is to recover the signal $\beta$ that minimizes a loss function $\mathcal{L}(X,y; \beta)$. We focus on the sample-poor regime where the dimension far exceeds the number of measurements; this regime has received significant attention from the machine learning and signal processing communities in recent years~\cite{negahban2009unified,CandesCS}.  

However, fitting the observations according to a linear model can be sometimes restrictive depending on the application. One way to ease this restriction is to assume a \emph{nonlinear} observation model:
\begin{align}\label{nonlindex}
 y = g(X\beta) + e =g(X(\Phi\theta_1 + \Psi \theta_2) )+ e,
 \end{align}
where $g$ denotes a nonlinear \textit{link} function and $e$ denotes observation noise. This is akin to the \emph{Generalized Linear Model} (GLM) as well as the \emph{Single Index Model} (SIM) from statistical learning~\cite{kakade2011}. Here, the problem is to estimate $\theta_1$ and $\theta_2$ from the observations $y$ with as few measurements as possible. The estimation problem~\eqref{nonlindex} is challenging in several different aspects: \\
(i) there is a basic identifiability of issue of obtaining $\theta_1$ and $\theta_2$ even with perfect knowledge of $\beta$; \\
(ii) there is a second identifiability issue arising from the nontrivial null-space of the design matrix (since $m \ll n$); and \\
(iii) the nonlinear nature of $g$, as well as the presence of noise $e$ can further confound recovery.

Standard techniques to overcome each of these challenges are well-known. By and large, these techniques all make some type of \emph{sparseness} assumption on the components $\theta_1$ and $\theta_2$ \cite{CandesCS}; some type of \emph{incoherence} assumption on the bases $\Phi$ and $\Psi$ \cite{elad2005simultaneous,donoho2006stable}; and some regularity condition on $g$ (which we elaborate later). However, to our knowledge, the confluence of the three above challenges have not been simultaneously addressed in the literature.

\subsection{Summary of contributions}
In this paper, we focus on the case where the components $\theta_1,\theta_2$ obey certain \emph{structured sparsity} assumptions. Structured sparsity models are useful in applications where the support patterns (i.e., the coordinates of the nonzero entries) belong to model-specific restricted families (for example, the support is assumed to be \emph{group-sparse} \cite{huang2010benefit}). It is known that such assumptions can significantly reduce the required number of samples for estimating the underlying signal, compared to generic sparsity assumptions~\cite{modelcs,SPINIT,approxIT}. We consider two classes of link functions:\emph{aperiodic} and \emph{periodic} functions, and accordingly, two different demixing approaches. Our approach in this paper builds upon and extends our recent previous work on nonlinear demixing~\cite{soltani2016fastTSP17,SoltaniHegde_Asilomar,SoltaniHegde_Globalsip}.

 In the aperiodic case, we follow the setup of~\cite{soltani2016fastTSP17} where $g$ is assumed to be monotonic; satisfies some type of \emph{restricted strong convexity} (RSC)~\cite{negahban2009unified}; and some type of \emph{restricted strong smoothness} (RSS) assumptions~\cite{yang2015sparse}. For this case, we develop a non-convex iterative algorithm that stably estimates the components $\theta_1$ and $\theta_2$.
 
 In the periodic case, we use the approach of~\cite{SoltaniHegde_ICASSP16} both for designing the matrix $X$ and the link function $g$. Specifically, we let $X$ be \emph{factorized} as $X = D B$, where $D\in\mathbb{R}^{m\times q}$, and $B\in\mathbb{R}^{q\times n}$ have some specific structures; please see Section \ref{Perm} for details. Again, for this case, we demonstrate a novel two-stage algorithm that stably estimate the components $\theta_1$ and $\theta_2$. 
 
For both cases considered above, we show that under certain sufficiency conditions, the performance of our methods strictly improves upon previous nonlinear demixing techniques, and asymptotically matches (close to) the best possible sample-complexity. 

\subsection{Prior work}
\label{sec:prior}
%\vskip -0.1 in
The demixing problem has been a recent focus in several fields including signal and image processing, machine learning, and computational physics~\cite{mccoyTropp2014}. The majority of the literature on the demixing problem studies the case of linear superposition of two or more components where these components can be modeled in various ways such as sparse vectors~\cite{mccoy2014convexity}, low-rank and sparse matrices~\cite{candes2011rpca}, and manifold models~\cite{spinisit,SPINIT}. Recently, a few papers have addressed the nonlinear setting where the observations are index-wise nonlinear functions of the superposition of the components~\cite{soltani2016fastTSP17,SoltaniHegde_Asilomar,SoltaniHegde_Globalsip}. This nonlinear demixing framework can also be considered as a special instance of nonlinear signal recovery which has recently received broad attention~\cite{boufounos20081,planRomanLin,davenport20141,ganti2015matrix}. For instance, \cite{soltani2016fastTSP17} considers the nonlinear demixing of a pair of sparse vectors with arbitrary supports where the nonlinearity is a monotonic function, obtains a sufficient condition on the number of samples for achieving a desired estimation accuracy. On the other hand,~\cite{SoltaniHegde_ICASSP16} studies the problem of nonlinear signal recovery and demixing, where the nonlinearity is a periodic function. 

We note that demixing approaches in high dimensions with structured sparsity assumptions have appeared before in the literature~\cite{mccoyTropp2014,mccoy2014convexity,rao2014forward}. However, our method differs from these earlier works in a few different aspects. The majority of these methods involve solving a convex relaxation problem; in contrast, our algorithm is manifestly \emph{non-convex}. Despite this feature, for certain types of structured superposition models, our method provably recovers the components given (nearly) $m = \mathcal{O}(s)$ samples. Moreover, these earlier methods have not explicitly addressed the nonlinear observation model (with the exception of \cite{plan2014high}). In this paper, we leverage the structured sparsity assumptions to our advantage, and show that this type of structured sparsity priors significantly decreases the sample complexity (both for periodic and aperiodic nonlinearities) for estimating the signal components.

\section{Preliminaries}
\label{Perm}
%\vskip -0.1 in
Let $\|.\|_q$ denote the $\ell_q$-norm of a vector. Denote the spectral norm of the matrix $X$ as $\|X\|$. Denote the true parameter vector, $\theta =  [\theta_1;\theta_2 ]\in\mathbb{R}^{2n}$ as the vector obtaining by stacking the true and unknown coefficient vectors,  $\theta_1, \theta_2$. For simplicity of exposition, in this paper we suppose that the components $\theta_1$ and $\theta_2$ exhibit \emph{block} sparsity with sparsity $s$ and block size $b$~\cite{modelcs}. (Analogous approaches apply for other structured sparsity models.)

The problem~\eqref{nonlindex} is inherently ill-posed. To resolve this issue, we need to assume that the coefficient vectors $\theta_1, \theta_2$ are somehow distinguishable from each other. This is characterized by a notion of incoherence of the components $\theta_1, \theta_2$~\cite{SoltaniHegde_Globalsip}.

\begin{definition}\label{incoherence}
The bases $\Phi$ and $\Psi$ are called $\varepsilon$-incoherent if 
%\begin{align*}
$
\varepsilon = \sup_{\substack{\|u\|_0\leq s,\ \|v\|_0\leq s  \\ \|u\|_2 = 1,\ \|v\|_2 = 1}}|\langle{\Phi u, \Psi v}\rangle|.
$
\end{definition}
For the analysis of aperiodic link functions, we need the following standard definition~\cite{negahban2009unified}:
\begin{definition}\label{rssrsc}
A function $f : \mathbb{R}^{2n} \rightarrow \mathbb{R}$ satisfies {Structured} \textit{Restricted Strong Convexity/Smoothness {(SRSC/SRSS)} }if:
\begin{align*}
m_{4s}\leq\|\nabla^2_{\xi} f(t)\|\leq M_{4s},\  \ t\in\mathbb{R}^{2n},
\end{align*}
where $\xi = \textrm{supp}(t_1)\cup \textrm{supp}(t_2)$, for all $t_i\in\mathbb{R}^{2n}$ such that \new{$t_i$ belongs to $(2s,b)$ block-sparse vectors} 
%$\|t_i\|_0\leq 2s$ 
for $i=1,2$, and $m_{4s}$ and $M_{4s}$ are (respectively) the \new{SRSC and SRSS} constants. Also, $\nabla^2_{\xi} f(t)$ denotes a $4s\times 4s$ sub-matrix of the Hessian matrix $\nabla^2 f(t)$ comprised of rows/columns indexed by $\xi \subset [2n]$.
\end{definition}
Furthermore, for aperiodic functions, we assume that the derivative of the link function is strictly bounded either within a positive interval, or within a negative interval. % (Here, our results for positive interval. Similar result holds for negative interval ). 
In addition, let $\beta_j$ denotes the $j^{\mathrm{th}}$ entry of the signal $\beta\in\mathbb{R}^n$.  Also, for $j \in \{1,2,\ldots,q\}$, $\beta(j:q:(k-1)q+j)\in\mathbb{R}^k$ denotes the sub-vector of $\beta$, starting at index $j + qr$, where $r = 0, 1, \ldots, k-1$. Finally, $Y((j:q:(k-1)q,l)$ represents the sub-vector made by picking the $l^{\mathrm{th}}$ column of any matrix $Y$ and choosing the entries of this column as stated. 

For the analysis of periodic link functions, by following the approach of~\cite{SoltaniHegde_ICASSP16}, we let the design matrix $X$ be \emph{factorized} as $X = D B$, where $D\in\mathbb{R}^{m\times q}$, and $B\in\mathbb{R}^{q\times n}$. We assume that $m$ is a multiple of $q$, and that $D$ is a concatenation of $k$ diagonal matrices of $q \times q$ such that the diagonal entries in the blocks of $D$ are i.i.d.\ random variables distributed uniformly within an interval $[-T,T]$ for some $T>0$. The choice of $B$ is flexible and can be chosen such that it supports stable demixing. In particular, as~\cite{soltani2016fastTSP17} has shown, $B$ can be any random matrix with independent subgaussian rows. Overall, our low-dimensional observation model can be written as:
\begin{align}\label{StruDMF}
y = g(DB\beta) + e = g(DB(\Phi\theta_1 + \Psi \theta_2)) + e,
\end{align}
%\begin{align}\label{RDMF}
%y = g(DB\beta) + e = g\left(
%\begin{bmatrix}
%D^1 \\
%D^2 \\
% \vdots  \\
%D^k
%\end{bmatrix}
%B\beta
%\right) + e,
%\end{align}
where $g$ is either sinusoidal function, or any periodic function such that in each period, it behaves monotonically. Furthermore, $D = [D_1,\ldots,D_k]^T$ comprises $k$ diagonal matrices $D_i$'s, and $e\in\mathbb{R}^m$ denotes additive noise such that $e\sim\mathcal{N}(0,\sigma^2I)$. The goal is to stably recover $\theta_1, \theta_2$ from the embedding $y$. The diagonal structure of the matrix $D$ reduces the the final recovery of underlying components to first obtaining a good enough estimation of $B\beta$, and then using a linear demixing approach from a (possibly noisy) estimate of $B\beta$ will lead to the estimation of $\theta_1, \theta_2$.

\section{Algorithms and analysis}
%\vskip -0.1 in
In this section, we describe our algorithm and theoretical result for both aperiodic and periodic link functions.

\subsection{Aperiodic link functions}
%\vskip -0.1 in
To solve the demixing problem in~\eqref{nonlindex}, we consider the minimization of a special loss function $F(t)$, following~\cite{SoltaniHegde_Globalsip}:
\begin{equation} \label{optprob}
\begin{aligned} \underset{t \in \mathbb{R}^{2n}}{\text{min}}
F(t) &= \frac{1}{m}\sum_{i=1}^m \Theta(x_i^T\Gamma t) - y_i x_i^T\Gamma t \ \ \ \
 \text{s.\ t.} \  \  t\in\mathcal{D},
%\quad \|t\|_0\leq 2s.
\end{aligned}
\end{equation}
where $\Theta'(x) = g(x)$, $\Gamma = [\Phi \  \Psi]$, $x_i$ is the $i^{\textrm{th}}$ row of the design matrix $X$ and $\mathcal{D}$ denotes the set of length-$2p$ vectors formed by stacking a pair of $(s,b)$ block-sparse vectors. The objective function in~\eqref{optprob} is motivated by the single index model in statistics; for details, see~\cite{SoltaniHegde_Globalsip}. To approximately solve~\eqref{optprob}, we propose an algorithm which we call it \emph{Structured Demixing with Hard Thresholding} (STRUCT-DHT). The pseudocode of this algorithm is given in Algorithm~\ref{algHTM}.
\begin{algorithm}[t]
\caption{Structured Demixing with Hard Thresholding (STRUCT-DHT)
\label{algHTM}
}
\begin{algorithmic}
\STATE \textbf{Inputs:} Bases $\Phi$ and $\Psi$, design matrix $X$, link function $g$, observation $y$, sparsity $s$, block size $b$, step size $\eta'$. 
\STATE \textbf{Outputs:} Estimates  $\widehat{\beta}=\Phi\widehat{\theta_1} + \Psi\widehat{\theta_2}$, $\widehat{\theta_1}$, $\widehat{\theta_2}$
\STATE\textbf{Initialization:}
\STATE$\left(\beta^0, \theta_1^0, \theta_2^0\right)\leftarrow\textsc{random initialization}$
%or,\\
%$\left(x^0, w^0, z^0\right)\leftarrow\textsc{OneShot}(\Phi,\Psi,A,y,s)$\\
\STATE$k \leftarrow 0$
\WHILE{$k\leq N$}
\STATE $t^k \leftarrow [ \theta_1^k ; \theta_2^k ]$\quad\quad\{Forming constituent vector\}
\STATE $t_1^k\leftarrow\frac{1}{m}\Phi^TX^T(g(X\beta^k) - y)$ 
\STATE$t_2^k\leftarrow\frac{1}{m}\Psi^TX^T(g(X\beta^k) - y)$
\STATE$\nabla F^k \leftarrow [ t_1^k ; t_2^k ]$
\quad\quad\{Forming gradient\}
\STATE${\tilde{t}}^k = t^k - \eta'\nabla F^k$
\quad\{Gradient update\}
\STATE$[ \theta_1^k ; \theta_2^k ]\leftarrow\mathcal{P}_{s;s;b}\left(\tilde{t}^k\right)$  
%\{Projection on sets $K_1$ and $K_2$\}
\quad\{Projection\}
\STATE$\beta^k\leftarrow\Phi \theta_1^k + \Psi \theta_2^k$\quad\{Estimating $\widehat{x}$\}
\STATE$k\leftarrow k+1$
\ENDWHILE
\STATE\textbf{Return:} $\left(\widehat{\theta_1}, \widehat{\theta_2}\right)\leftarrow \left(\theta_1^N, \theta_2^N\right)$
\end{algorithmic}
\end{algorithm}
At a high level, \textsc{STRUCT-DHT} tries to minimize loss function defined in~\eqref{optprob} (tailored to $g$) between the observed samples $y$ and the predicted responses $X\Gamma \widehat{t}$, where $\widehat{t} = [\widehat{\theta}_1; \ \widehat{\theta}_2]$ is the estimate of the parameter vector after $N$ iterations. The algorithm proceeds by iteratively updating the current estimate of $\widehat{t} \ $based on a gradient update rule followed by (myopic) \emph{hard thresholding} of the residual onto the set of $s$-sparse vectors in the span of $\Phi$ and $\Psi$. Here, we consider a version of \textsc{DHT}~\cite{SoltaniHegde_Globalsip} which is applicable for the case that coefficient vectors $\theta_1$ and $\theta_2$ have block sparsity. For this setting, we use component-wise block-hard thresholding, $\mathcal{P}_{s;s;b}$~\cite{modelcs}. Specifically, $\mathcal{P}_{s;s;b}(\tilde{t}^k)$ projects the vector $\tilde{t}^k\in\mathbb{R}^{2n}$ onto the set of concatenated $(s,b)$ block-sparse vectors by projecting the first and the second half of $\tilde{t}^k$ separately.  
Now, we provide the theorem supporting the convergence analysis and sample complexity (required number of observations for successful estimation of $\theta_1, \theta_2$) of \textsc{STRUCT-DHT}.

\begin{theorem}
\label{mainThConvergence}
Consider the observation model~\eqref{nonlindex} with all the assumption and definitions mentioned in the section~\ref{Perm}. Suppose that the corresponding objective function $F$ satisfies the Structured SRSS/SRSC properties with constants $M_{6s}$ and $m_{6s}$ 
%on the set \new{$J$ correponding to $(6s,b)$ block-sparse vectors}
%$\|J\|_0\leq 6s$ 
such that $1\leq\frac{M_{6s}}{m_{6s}}\leq\frac{2}{\sqrt{3}}$ . Choose a step size parameter $\eta'$ with $\frac{0.5}{M_{6s}}<\eta^{\prime}<\frac{1.5}{m_{6s}}$. 
Then, \textsc{DHT} outputs a sequence of estimates $(\theta_1^k, \theta_1^k)$ ($t^{k+1} = [\theta_1^k; \theta_1^k]$) such that the estimation error of the underlying signal satisfies the following upper bound (in expectation) for any $k\geq 1$: %\new{I dropped E from both sides of (4.8), but express the expectation of the norm of gradient}
\begin{align}
\label{eq:linconverge}
\|t^{k+1} - \theta\|_2\leq\left(2q\right)^k\|t^0-\theta\|_2 + C\tau\sqrt{\frac{s}{m}}, 
\end{align}
where $q = 2\sqrt{1+{\eta^{\prime}}^2M_{6s}^2-2\eta^{\prime} m_{6s}}$  and $C>0$ is a constant that  depends on the step size $\eta^{\prime}$ and the convergence rate $q$. Here, $\theta$ denotes the true stacked signal defined in section~\ref{Perm}.
\end{theorem}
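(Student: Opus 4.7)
The plan is to follow the standard template for analyzing projected gradient descent with model-based hard thresholding, adapted to the structured-sparse demixing setting. Let $t^{k+1}=\mathcal{P}_{s;s;b}(\tilde{t}^k)$ where $\tilde{t}^k = t^k - \eta' \nabla F(t^k)$, and let $\Omega^{k+1}=\textrm{supp}(t^{k+1})\cup\textrm{supp}(t^k)\cup\textrm{supp}(\theta)$. Since each of the three constituent vectors is a concatenation of two $(s,b)$-block-sparse vectors, $\Omega^{k+1}$ has support captured by at most a $(6s,b)$-block-sparse pattern, and all relevant inner products can be analyzed by restricting to this coordinate set — this is exactly why the constants $M_{6s},m_{6s}$ appear.

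First I would invoke the projection inequality. Because $\theta\in\mathcal{D}$ is feasible and $\mathcal{P}_{s;s;b}$ is an exact projection onto $\mathcal{D}$ (it splits across the two halves and hard-thresholds each half onto a block-sparse model), one has $\|t^{k+1}-\tilde{t}^k\|_2 \leq \|\theta-\tilde{t}^k\|_2$, and therefore by the triangle inequality $\|t^{k+1}-\theta\|_2\leq 2\|\tilde{t}^k-\theta\|_2$. This is the source of the factor of $2$ in the contraction coefficient.

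Next I would bound $\|\tilde{t}^k-\theta\|_2$ by expanding
\begin{equation*}
\tilde{t}^k - \theta \;=\; (t^k-\theta) - \eta'\bigl(\nabla F(t^k) - \nabla F(\theta)\bigr) - \eta'\nabla F(\theta).
\end{equation*}
For the first two terms, restrict attention to coordinates in $\Omega^{k+1}$ and apply the mean value theorem, writing $\nabla F(t^k)-\nabla F(\theta) = H^k_{\Omega^{k+1}}(t^k-\theta)$ for some Hessian $H^k$ evaluated along the segment. Using SRSC/SRSS from Definition~\ref{rssrsc} with parameter $6s$, the operator norm of $I-\eta' H^k_{\Omega^{k+1}}$ on this subspace is bounded by $\sqrt{1+\eta'^2 M_{6s}^2-2\eta' m_{6s}}$; this is where the assumption $1\leq M_{6s}/m_{6s}\leq 2/\sqrt{3}$ together with the step-size range $\tfrac{0.5}{M_{6s}}<\eta'<\tfrac{1.5}{m_{6s}}$ ensures the quantity is strictly less than $1/2$, so that the overall contraction factor $q=2\sqrt{1+\eta'^2 M_{6s}^2-2\eta' m_{6s}}<1$.

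Finally, I would control the stochastic term $\eta'\nabla F(\theta)$ restricted to any fixed $(6s,b)$-block support. Since the rows $x_i$ are subgaussian and the noise enters through $g(x_i^T\Gamma\theta)-y_i$, each coordinate of $\nabla F(\theta)$ is a sum of independent mean-zero subgaussian variables; a union bound over the polynomial-in-$n$ number of admissible block supports yields $\|\nabla F(\theta)_{\Omega^{k+1}}\|_2 = \mathcal{O}(\tau\sqrt{s/m})$ with high probability, where $\tau$ absorbs the subgaussian parameter and the bound on $g'$. Unrolling the one-step recursion $\|t^{k+1}-\theta\|_2\leq q\,\|t^k-\theta\|_2 + C\tau\sqrt{s/m}$ across $k$ iterations and summing the geometric series gives~\eqref{eq:linconverge}.

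The main obstacle I anticipate is the last step: obtaining the $\sqrt{s/m}$ rate uniformly over all $(6s,b)$-block supports, since a naive union bound over arbitrary $6s$-sparse supports would give an extra $\log(n/s)$ factor. Exploiting the block structure (there are only $\mathcal{O}((n/b)^{6s/b})$ admissible supports rather than $\binom{n}{6s}$) is what reduces the sample complexity to the promised $m=\mathcal{O}(s)$ scaling and keeps the noise floor at $\tau\sqrt{s/m}$ without extra logarithmic factors; verifying SRSC/SRSS on all such restricted supports with the same improved count is the other technical point that needs care.
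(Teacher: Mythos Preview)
Your approach follows the same projected--gradient template as the paper's proof sketch, but two differences are worth flagging. First, the paper works directly with the \emph{restricted} gradient step $b = t^k - \eta'\nabla_J F(t^k)$, where $J=\textrm{supp}(t^k)\cup\textrm{supp}(t^{k+1})\cup\textrm{supp}(\theta)$, so that $b$, $t^{k+1}$, and $\theta$ all live in the same $6s$-dimensional coordinate subspace and the SRSC/SRSS bounds apply immediately to $\|b-\theta\|_2$. You instead apply the projection inequality to the full $\tilde t^k$ and only afterwards ``restrict attention to coordinates in $\Omega^{k+1}$''; this is ultimately equivalent (since $t^{k+1}-\tilde t^k$ and $\theta-\tilde t^k$ coincide on $\Omega^{k+1}^c$), but as written that restriction step needs one extra line of justification --- you cannot bound the full $\|\tilde t^k-\theta\|_2$ with restricted Hessian conditions alone.

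Second, and more substantively, the paper controls the noise term $\|\nabla_J F(\theta)\|_2$ \emph{in expectation} via the Khintchine inequality, which is what the ``in expectation'' clause of the theorem asks for. You instead propose subgaussian concentration plus a union bound over block supports, which delivers a high-probability guarantee rather than an expectation bound. Both routes produce the $\tau\sqrt{s/m}$ scale, but they are different statements; for the theorem as phrased, the Khintchine route is the right one. Finally, your last paragraph about counting $(s,b)$-block supports to save the $\log(n/s)$ factor is really the content of Theorem~\ref{samplemono} (establishing SRSC/SRSS with $m=\mathcal{O}(s)$ samples), not of this convergence result, which simply takes SRSC/SRSS as a hypothesis.
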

\begin{proof}[Proof sketch]
The proof follows the technique used to prove Theorem 4.6 in~\cite{soltani2016fastTSP17}. The main steps are as follows.
%\begin{enumerate}
%\item 
Let $b'\in\mathbb{R}^{2n} =[b_1';b_2']=  t^k - \eta'\nabla F(t^k)$, $b = t^k - \eta'\nabla_J F(t^k)$ where $J :=J_k= \text{supp}(t^k)\cup \text{supp}(t^{k+1})\cup \text{supp}(\theta)$ and $b_1', \  b_2'\in\mathbb{R}^{n}$ (Here, $\theta = [\theta_1;\theta_2]$ denotes the true signal). Also define $\ t^{k+1} = \mathcal{P}_{s;s}(b') = [\mathcal{P}_s(b'_1); \mathcal{P}_s(b'_2)]$. Now, by the triangle inequality, we have:
$\|t^{k+1} - \theta\|_2\leq \|t^{k+1} - b\|_2 + \|b- \theta\|_2$.
%\item 
The proof is completed by showing that $\|t^{k+1} - b\|_2\leq 2\|b - \theta\|_2$.
%\item 
Finally, we use the Khintchine inequality~\cite{vershynin2010introduction} to bound the expectation of the $\ell_2$-norm of the restricted gradient function, $\nabla F(\theta)$ (evaluated at the true stacked signal $\theta$) with respect to the support set $J$).
%\end{enumerate}
\end{proof}

The inequality~\eqref{eq:linconverge} indicates the linear convergence behavior of our proposed algorithm. Specifically, in the noiseless scenario to achieve $\kappa$-accuracy in estimating the parameter vector $\widehat{t} = [\widehat{\theta}_1; \ \widehat{\theta}_2]$, \textsc{Struct-DHT} only requires $\log\left(\frac{1}{\kappa}\right)$ iterations.
We also have the following theorem regarding the sample complexity of Alg.\ \ref{algHTM}:
\begin{theorem}\label{samplemono}
If the rows of $X$ are independent subgaussian random vectors~\cite{vershynin2010introduction}, then the required number of samples for successful estimation of the components, $n$ is given by $\mathcal{O}\left(\frac{s}{b}\log\frac{n}{s}\right)$. Furthermore, if $b = {\Omega}\left(\log\frac{n}{s}\right)$, then the sample complexity of our proposed algorithm is given by $m = \mathcal{O}(s)$, which is asymptotically optimal.
\end{theorem}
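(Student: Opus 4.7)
The plan is to show that when the rows of $X$ are independent subgaussians, the SRSS/SRSC hypotheses required by Theorem~\ref{mainThConvergence} hold with high probability as soon as $m = \mathcal{O}\bigl(\tfrac{s}{b}\log\tfrac{n}{s}\bigr)$, and then to specialize to $b = \Omega(\log(n/s))$. The starting point is the identity
\begin{equation*}
\nabla^2 F(t) \;=\; \frac{1}{m}\,\Gamma^{T} X^{T}\,\mathrm{diag}\bigl(g'(X\Gamma t)\bigr)\,X\Gamma,
\end{equation*}
so that, because the derivative of $g$ is assumed to lie strictly inside a positive (or negative) interval, the SRSS/SRSC constants of $F$ are, up to multiplicative constants depending on the bounds on $g'$, the restricted extremal eigenvalues of $\tfrac{1}{m}\Gamma^{T} X^{T} X\Gamma$ over the cone of interest. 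This reduces the problem to a block-sparse restricted isometry statement for the matrix $X\Gamma$.

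Next I would invoke the $\varepsilon$-incoherence of $(\Phi,\Psi)$ from Definition~\ref{incoherence} to argue that restricted eigenvalues of $X\Gamma$ on the set $\mathcal{D}$ of stacked $(s,b)$ block-sparse vectors are equivalent (up to an $\varepsilon$-perturbation) to restricted eigenvalues of $X$ on the image $\Phi\mathcal{D}_1 + \Psi\mathcal{D}_2$, whose elements have at most $2s$ nonzeros supported on a union of block-sparse supports. Standard subgaussian concentration with a covering argument on each block-sparse subspace then gives that, with probability at least $1 - e^{-cm}$, the ratio $M_{6s}/m_{6s}$ lies inside the admissible window $[1,2/\sqrt{3}]$ of Theorem~\ref{mainThConvergence} provided $m \gtrsim D + \log N$, where $D = \mathcal{O}(s)$ is the ambient dimension of each block-sparse subspace and $N$ is the number of such supports.

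The combinatorial count is the place where block sparsity pays off: the number of $(s,b)$ block-sparse supports in $[n]$ is $\binom{n/b}{s/b}$, and hence the number of supports for the concatenated vector in $\mathcal{D}$ is $\binom{n/b}{s/b}^{2}$, giving
\begin{equation*}
\log N \;\lesssim\; \tfrac{s}{b}\log\tfrac{en}{s}.
\end{equation*}
Adding this to the $D = \mathcal{O}(s)$ contribution yields the required bound $m = \mathcal{O}\bigl(\tfrac{s}{b}\log\tfrac{n}{s}\bigr)$ for the SRSS/SRSC event to hold. Finally, substituting $b = \Omega(\log(n/s))$ makes the $\log$-factor absorbed into the ambient-dimension term, giving $m = \mathcal{O}(s)$, which matches the $\Omega(s)$ lower bound for recovering an $s$-sparse vector and is therefore asymptotically optimal.

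The main obstacle I expect is bookkeeping the interaction between (i) the nonlinearity (producing the $g'$ weighting in the Hessian, which must be controlled uniformly over $t$ in the block-sparse set via a boundedness assumption on $g'$), (ii) the incoherence $\varepsilon$ between $\Phi$ and $\Psi$ (needed to translate a restricted isometry of $X$ into one of $X\Gamma$ with only a small loss in the SRSS/SRSC constants), and (iii) the union-of-subspaces covering argument whose constants must be chosen so that $M_{6s}/m_{6s}$ fits inside the tight interval $[1,2/\sqrt{3}]$ demanded by the convergence theorem. Routine covering and subgaussian concentration arguments (e.g.\ those used in \cite{soltani2016fastTSP17,modelcs}) then dispose of the remaining calculations.
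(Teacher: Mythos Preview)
Your proposal is correct and follows essentially the same route as the paper's own proof sketch: reduce the sample-complexity question to verifying SRSC/SRSS for $F$, establish this via subgaussian concentration on each fixed block-sparse support followed by a union bound, and use the count $\binom{n/b}{s/b}=\mathcal{O}\bigl((n/s)^{s/b}\bigr)$ to obtain $m=\mathcal{O}\bigl(\tfrac{s}{b}\log\tfrac{n}{s}\bigr)$, then specialize to $b=\Omega(\log(n/s))$. Your write-up is in fact more explicit than the paper's sketch (you spell out the Hessian, the role of $\varepsilon$, and the $D+\log N$ decomposition), but the underlying argument is the same.
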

\begin{proof}[Proof sketch]
The proof is similar to the proof of Theorem 4.8 in~\cite{soltani2016fastTSP17} where we had previously derived upper bounds on the sample complexity of demixing by proving that $F$ satisfies RSC/RSS with reasonable parameters. Here, the steps are essentially the same as in~\cite{soltani2016fastTSP17}. The proof approach uses standard concentration techniques to show that the Euclidean norm of a sparse vector with fixed support is preserved with high probability under the action of the design matrix $X$. The proof follows by taking union bound over the set of \emph{all} sparse vectors, the size of which is given by $\mathcal{O}\left((\frac{n}{s}\right)^s)$. This increases the sample complexity by a log factor over the number of ``free'' parameters. The same strategy is applicable here, except that we need to compute union bound over the set of $(s,b)$ block-sparse vectors. The size of this set is given by ${\frac{n}{b}\choose\frac{s}{b}} = \mathcal{O}\left((\frac{n}{s})^{\frac{s}{b}}\right)$ which is considerably smaller than the set of all sparse vectors. Now, if we choose $m=\mathcal{O}\left(\frac{s}{b}\log\frac{n}{s}\right)$, then the objective function in~\eqref{optprob} satisfies SRSC/SRSS condition. Finally, if $b$ scales as $b = {\Omega}\left(\log\frac{n}{s}\right)$, we obtain $m=\mathcal{O}(s)$ which is an asymptotic gain over $\mathcal{O}\left(s\log\frac{n}{s}\right)$. 
\end{proof}
The big-Oh constant hides dependencies on various parameters, including the coherence parameter $\varepsilon$, as well as the upper and the lower bounds on the derivative of the link function $g$. %which is equal to $\frac{n}{b}\choose\frac{s}{b}$. 
%Then the sample complexity follows by the same argument. 

\subsection{Periodic link functions}
%\vskip -0.1 in
In this section, we focus on the periodic link functions which are either sinusoidal (complex-exponential), or any periodic function such that it is monotonic within each period. We start with the sinusoidal (complex-exponential) link function and follow the approach of~\cite{SoltaniHegde_ICASSP16}. In~\cite{SoltaniHegde_ICASSP16}, the authors proposed an algorithm called \emph{MF-Sparse} for recovering an underlying signal which is arbitrary sparse, or is the superposition of two arbitrary sparse components, but they only considered sinusoidal link functions. This algorithm has two steps: first step outputs a vector $\hat{z}$ as the estimate of $z=B\beta$ from measurement $y$  in~\eqref{StruDMF}. The idea is to use leverage the structure of the block diagonal matrix $D$ to decouple the estimation of each entry in $z$ through a tone estimation algorithm proposed in~\cite{eftekhari2013matched}. Then, $\hat{z}$ is used as the input for the second step where any sparse recovery technique can be used to estimate the underlying signal (in~\cite{SoltaniHegde_ICASSP16}, the \emph{CoSaMP} algorithm~\cite{needell2009cosamp} has been used for the second step). 

In our case, we use MF-Sparse algorithm as a core algorithm for estimating the underlying components albeit with two differences: first, we might have a preprocessing step before tone estimation depending on the periodic nonlinearity. More precisely, if we use a link function except sinusoidal, we first map the observation vector $y$ to $\tilde{y}$ through a sinusoidal function and use this new observation vector $\widetilde{y}$ as the input to the second step, tone estimation. To give a explanation why this method works, we note that in each period, the link function is assumed to be monotonic; as a result, for each entry of $\widetilde{y}$, there is one and only one entry from $y$. Thus, we can use the method of recovery under sinusoidal nonlinearity to estimate the underlying components $\widehat{\theta_1},\widehat{\theta_2}$. Second, for the third stage, we invoke STRUCT-DHT with identity link function $g(x) = x$ instead of any regular sparse recovery method. We call the resulting algorithm \emph{MF-STRUCT-DHT} and is given in Algorithm~\ref{MFSTRUCTDHT}.

\begin{algorithm}[t]
\caption{\textsc{MF-STRUCT-DHT}} 
\label{MFSTRUCTDHT}
\begin{algorithmic}
\STATE\textbf{Inputs:} $y$, $D$, $B$, $\Omega$, $s$, $b$,$\Phi$,$\Psi$,$\eta'$,$g$
\STATE\textbf{Output:} $\widehat{\theta_1},\widehat{\theta_2}$
%\State $ k \leftarrow m/q$
\STATE \textbf{Stage 1: Mapping:}
\IF {$g(x)\neq\sin(x)$}
\STATE $\tilde{y} = \sin(y)$
\STATE $y\leftarrow\tilde{y}$
\ENDIF
\STATE \textbf{Stage 2: Tone estimation:}
\FOR {$l =1:q$}
\STATE $t \leftarrow D(l:q:(k-1)q+l,l)$
\STATE $u \leftarrow y(l:q:(k-1)q+l)$
%\State estimate $z_l$ from $u = \exp(jtz_l)$ using 
\STATE $\widehat{z_l} = \argmax_{\omega\in\Omega}|\langle y,\psi_{\omega}\rangle|$
\ENDFOR
\STATE $\widehat{z} \leftarrow [\widehat{z_1},\widehat{z_2}\ldots,\widehat{z_q}]^T$
\STATE \textbf{Stage 2: Structured demixing recovery}
%\State $\varepsilon \leftarrow \mathcal{O}(\frac{1}{|T|})$
\STATE $g(x)\leftarrow x$
\STATE  $X\leftarrow B$
\STATE $\widehat{\theta_1},\widehat{\theta_2} \leftarrow \textsc{STRUCT-DHT}(\widehat{z},X,s,b,\Phi,\Psi,\eta',g)$
\end{algorithmic}
\end{algorithm}

%instead of regular sparse recovery algorithm, being used in MF-Sparse. 
%The overall process which we call it \emph{MF-STRUCT-DHT} can be summarized as follows:
%\begin{align*}
%&\hskip -.01 in\hat{z}  \leftarrow \textsc{MF{-}Sparse}(y,D,B,s), \ \ \ g(x)\leftarrow x, \ \ X\leftarrow B \\
%&\widehat{\theta_1},\widehat{\theta_2} \leftarrow \textsc{STRUCT-DHT}(\widehat{z},X,s,b,\Phi,\Psi,\eta',g).
%\end{align*}
%
%Now, we turn to the class of periodic functions which are monotonic in each period. In this case, we do not assume the measurement $y$ is corrupted by any noise. In this case, we follow the following steps:
%\begin{align*}
%&\tilde{y} = \sin(y) \\
%&\hskip -.01 in\hat{z}  \leftarrow \textsc{MF{-}Sparse}(\tilde{y},D,B,s), \ \ \ g(x)\leftarrow x, \ \ X\leftarrow B \\
%&\widehat{\theta_1},\widehat{\theta_2} \leftarrow \textsc{STRUCT-DHT}(\widehat{z},X,s,b,\Phi,\Psi,\eta',g).
%\end{align*}
%
%Here, we first map the observation vector $y$ to $\tilde{y}$ through a sinusoidal function and use this new observation vector $\widetilde{y}$ as the input to MF-STRUCT-DHT. 
%%To give a explanation why this method works, 
%We note that in each period, the link function is assumed to be monotonic; as a result, for each entry of $\widetilde{y}$, there is one and only one entry from $y$. Thus, we can use the method of recovery under sinusoidal nonlinearity to estimate the underlying components $\widehat{\theta_1},\widehat{\theta_2}$. 
By combining Theorem~\ref{samplemono} and Theorem 2.1 in~\cite{SoltaniHegde_ICASSP16}, we obtain the sample complexity of the MF-STRUCT-DHT scheme to achieve $\kappa$-accuracy.

\begin{theorem}[Sample complexity of MF-STRUCT-DHT]
Consider the measurement model in~\eqref{StruDMF} without any additive noise. Assume that the nonzero entries of block diagonal matrix $D$ are i.i.d. random variables, distributing uniformly within the interval $[-T,T]$, and the rows of $B$ are independent subgaussian random vectors (normalized by $\frac{1}{q}$). Moreover, assume $\|x\|_2 \leq R$ for some constant $R>0$. If we set $m = kq$ where $k=c_1 \log\left(\frac{Rq}{\kappa}\frac{1}{\delta}\right)$ for some $\kappa>0$, $q = \mathcal{O}\left(\frac{s}{b}\log\frac{n}{s}\right)$, $\omega = c_2 R$, and $\Omega = [-\omega,\omega]$, MF-STRUCT-DHT scheme provides an estimate $\widehat{\beta}$, such that 
$\|\beta - \widehat{\beta}\|_2 \leq \mathcal{O}( \kappa) \, ,$
with probability at least $1-\delta$. Here, $c_1, c_2$ are constants. Furthermore, if $b$ scales as $b = {\Omega}\left(\log\frac{n}{s}\right)$, then the sample complexity of MF-STRUCT-DHT scheme is given by $m = \mathcal{O}(s)$, which is asymptotically optimal.
\end{theorem}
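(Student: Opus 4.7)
The plan is to decompose the analysis according to the three stages of Algorithm~\ref{MFSTRUCTDHT} and then stitch the error bounds together. Write the intermediate quantity $z = B\beta \in \mathbb{R}^q$, so that the raw observation in~\eqref{StruDMF} reads $y = g(Dz) + e$. The algorithm first maps $y$ (via $\sin(\cdot)$ if $g$ is periodic but not already sinusoidal) to reduce to the complex--exponential case, then performs per-coordinate tone estimation to produce $\widehat{z}$, and finally calls \textsc{STRUCT-DHT} with identity link on the effective linear system $\widehat{z} \approx B\beta = B(\Phi\theta_1 + \Psi\theta_2)$.

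The first step is to invoke Theorem 2.1 of~\cite{SoltaniHegde_ICASSP16} to control the tone-estimation error in Stage~2. Provided the diagonal entries of $D$ are i.i.d.\ uniform on $[-T,T]$, a union bound over the $q$ per-coordinate tone estimation problems yields $\|z - \widehat{z}\|_\infty \leq \kappa/q$ with probability at least $1-\delta$, as soon as $k = c_1 \log(Rq/(\kappa\delta))$; hence $\|z-\widehat{z}\|_2 \leq \kappa/\sqrt{q}$, since only at most $q$ coordinates contribute. In the non-sinusoidal case we first argue that the preprocessing $\widetilde{y} = \sin(y)$ is lossless because $g$ is monotone within each period, so the mapped measurements carry exactly the same information about $Dz$ modulo a known period, and the tone estimator can be applied verbatim to $\widetilde{y}$.

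Next I treat $\widehat{z}$ as noisy linear measurements of $\beta$ under the design matrix $B$: $\widehat{z} = B\beta + e'$ with $\|e'\|_2 \leq \kappa/\sqrt{q}$. Since the rows of $B$ are independent subgaussian (normalized by $1/q$), Theorem~\ref{samplemono} applied with identity link ensures that \textsc{STRUCT-DHT} satisfies the SRSC/SRSS assumptions whenever $q = \mathcal{O}\bigl(\tfrac{s}{b}\log\tfrac{n}{s}\bigr)$. The linear convergence guarantee in~\eqref{eq:linconverge}, specialized to identity $g$, then produces an estimate $\widehat{\beta} = \Phi\widehat{\theta_1} + \Psi\widehat{\theta_2}$ whose error is dominated by the noise term, giving $\|\beta - \widehat{\beta}\|_2 \lesssim \sqrt{s/q}\,\|e'\|_2 \leq \mathcal{O}(\kappa)$ after sufficiently many outer iterations.

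Finally I combine: the overall sample count is $m = kq = \mathcal{O}\bigl(\tfrac{s}{b}\log\tfrac{n}{s}\cdot \log(Rq/(\kappa\delta))\bigr)$, and when the block size scales as $b = \Omega(\log(n/s))$ the first factor collapses to $s$, yielding the asymptotically optimal $m = \mathcal{O}(s)$. The main obstacle is the careful error propagation between stages: one must ensure that the per-coordinate tone estimation accuracy is aggressive enough ($\kappa/q$ rather than merely $\kappa$) so that the induced $\ell_2$ perturbation $\|e'\|_2$ that enters \textsc{STRUCT-DHT} is small enough that, after amplification by the conditioning of $B$ restricted to the model set, the final recovery error is still $\mathcal{O}(\kappa)$. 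This matching of tolerances across stages, together with consistent choice of high-probability events so that a single union bound over Stages~2 and~3 still yields $1-\delta$, is where the constants $c_1, c_2$ and the scaling of $\omega$ are pinned down.
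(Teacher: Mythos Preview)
Your proposal is correct and follows essentially the same route as the paper: invoke Theorem~2.1 of~\cite{SoltaniHegde_ICASSP16} to control the per-coordinate tone-estimation error so that $\|z-\widehat z\|_2$ is $\mathcal{O}(\kappa)$ (the paper sets $\upsilon=\mathcal{O}(\kappa/\sqrt{q})$, you set $\kappa/q$, which only changes constants inside the $\log$), then feed $\widehat z$ into \textsc{STRUCT-DHT} with identity link and appeal to Theorem~\ref{samplemono} with $q=\mathcal{O}\bigl(\tfrac{s}{b}\log\tfrac{n}{s}\bigr)$. The only slip is the extra factor $\sqrt{s/q}$ in your Stage~3 error bound---the stability of block-IHT under a bounded deterministic perturbation $e'$ is simply $\|\beta-\widehat\beta\|_2\lesssim\|e'\|_2$, not $\sqrt{s/q}\,\|e'\|_2$---but this is harmless for the final $\mathcal{O}(\kappa)$ conclusion.
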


\begin{proof}
The proof follows from a straightforward application of Theorem 2.1 in~\cite{SoltaniHegde_ICASSP16}. According to this result, one can estimate $\widehat{z}$ (the estimation of $z =B\beta$) up to $\upsilon$-accuracy if $T$ scales as $|T| = \mathcal{O}(\frac{1}{\upsilon})$. Under this choice for $T$, the required number of block diagonal matrices in $D$ to achieve $\upsilon$ accuracy for estimating $z$ is given by $k =\mathcal{O}\left(\log(\frac{\Omega}{\upsilon})\right)$ where $|\Omega| = \mathcal{O}(R)$ (see Algorithm~\ref{MFSTRUCTDHT}). Now by choosing the design matrix $B\in\R^{q\times n}$, final accuracy parameter $\kappa$ as $\upsilon =\mathcal{O}(\frac{\kappa}{\sqrt{q}})$, and choosing $q = \mathcal{O}\left(\frac{s}{b}\log\frac{n}{s}\right)$ according to Theorem~\ref{samplemono}, the result follows.
\end{proof}
Note that big-Oh constant does not depend on the bounds on the derivative of the link function since it is a identity function. In addition, if the periodic link function $g$ is set to the sinusoidal (complex-exponential), then the additive noise can be added to~\eqref{StruDMF}. In this case, the sample complexity is increased by a multiplicative factor equals to $1+\sigma^2$ where $\sigma^2$ denotes the variance of the Gaussian noise; see~\cite{SoltaniHegde_ICASSP16} for details.

\section{Numerical results}
\vskip -0.1 in
To show the efficacy of \textsc{Struct-DHT} for demixing components with structured sparsity for aperiodic link funciotns, we numerically compare \textsc{Struct-DHT} with ordinary \textsc{DHT} (which does \emph{not} leverage structured sparsity), and also with an adaptation of a convex formulation described in~\cite{yang2015sparse} that we call \emph{Demixing with Soft Thresholding} (\textsc{DST}). We first generate true components $\theta_1$ and $\theta_2$ with length $n = 2^{16}$ with nonzeros grouped in blocks with length $b = 16$ and total sparsity $s = 656$. The nonzero (active) blocks are randomly chosen from a uniform distribution over all possible blocks. %The design matrix $X$ is implemented as a partial DFT matrix multiplied with a diagonal matrix with random $\pm1$ entries

We construct a design (observation) matrix following the construction of~\cite{krahmer2011new}. Finally, we use a (shifted) sigmoid link function given by $g(x) = \frac{1-e^{-x}}{1 + e^{-x}}$ to generate the observations $y$.  Fig~\ref{fig:ComparisonSyn} shows the the performance of the three algorithms with different number of samples averaged over $10$ Monte Carlo trials. In Fig~\ref{fig:ComparisonSyn}(a), we plot the probability of successful recovery, defined as the fraction of trials where the normalized error is less than 0.05. Fig~\ref{fig:ComparisonSyn}(b) shows the normalized estimation error for these algorithms. As we can observe, \textsc{Struct-DHT} shows much better sample complexity (the required number of samples for obtaining small relative error) as compared to \textsc{DHT} and \textsc{DST}. 

We conduct a similar experiment for two periodic link functions: sinusoidal and sawtooth (\emph{modulo}) functions with period $2\pi$ and amplitude $1$. The parameters are as before, except we set $n=2^{14}$, $s=160$, and $k=4$. We numerically compare MF-STRUCT-DHT scheme with the case where we do not consider the structured sparsity, and with a convex relaxation  formulation~\cite{yang2015sparse}. Figures~\ref{fig:ComparisonSyn}(c) and (d) show the probability of success for the sinusoidal and sawtooth cases, respectively. Again, we get the same conclusion as in the aperiodic case: our proposed algorithm achieves far improved sample complexity over previous existing methods that solely rely on sparsity assumptions.

\begin{figure}
\vskip -0.4in
\begin{center}
\begingroup
\setlength{\tabcolsep}{.1pt} % Default value: 6pt
\renewcommand{\arraystretch}{.1} % Default value: 1
\begin{tabular}{cc}      %{c@{\hskip .1pt}c@{\hskip .1pt}c}
\includegraphics[trim = 7mm 58mm 14mm 30mm, clip, width=0.48\linewidth]{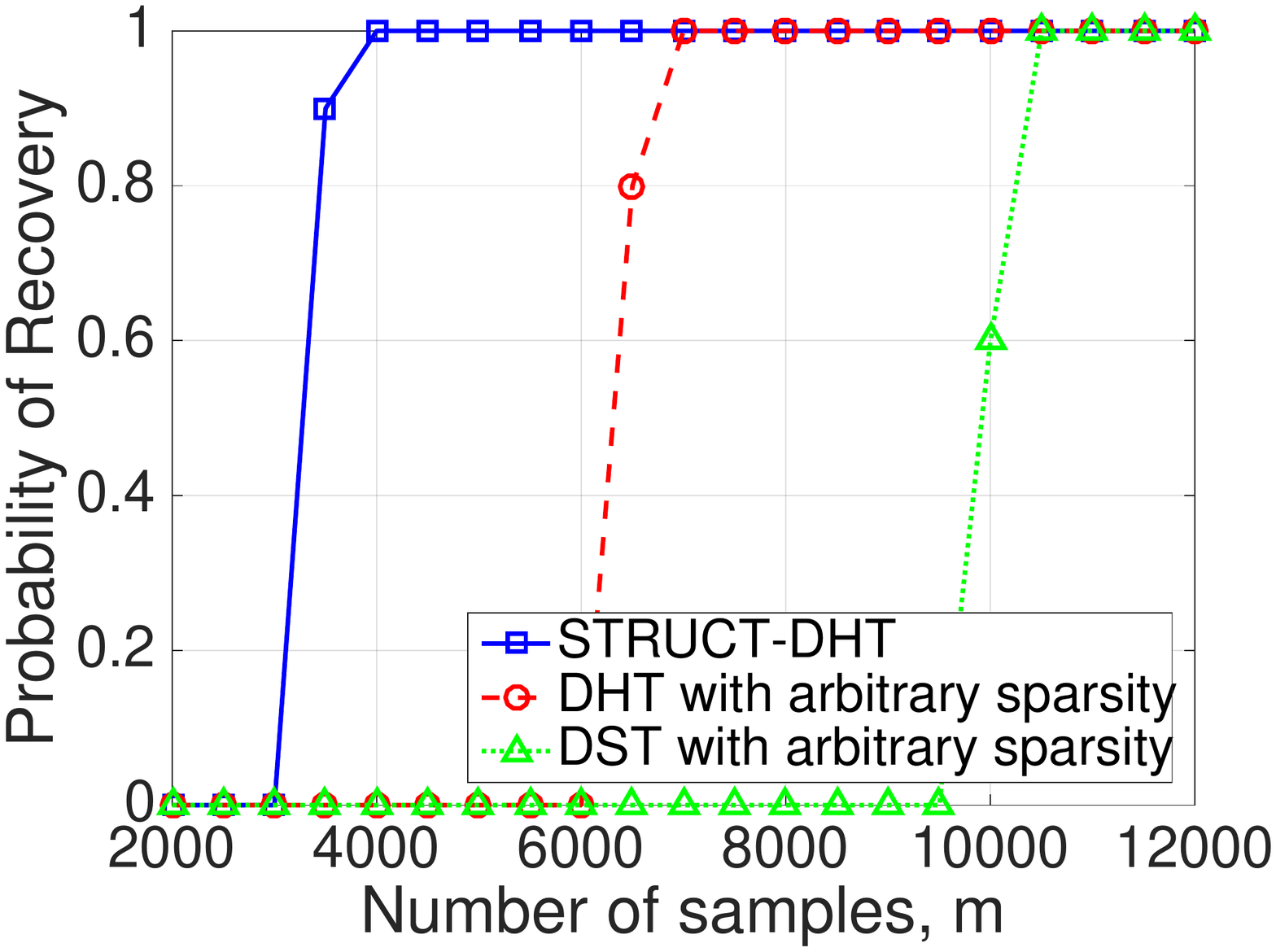}&
\includegraphics[trim = 7mm 58mm 14mm 30mm, clip, width=0.48\linewidth]{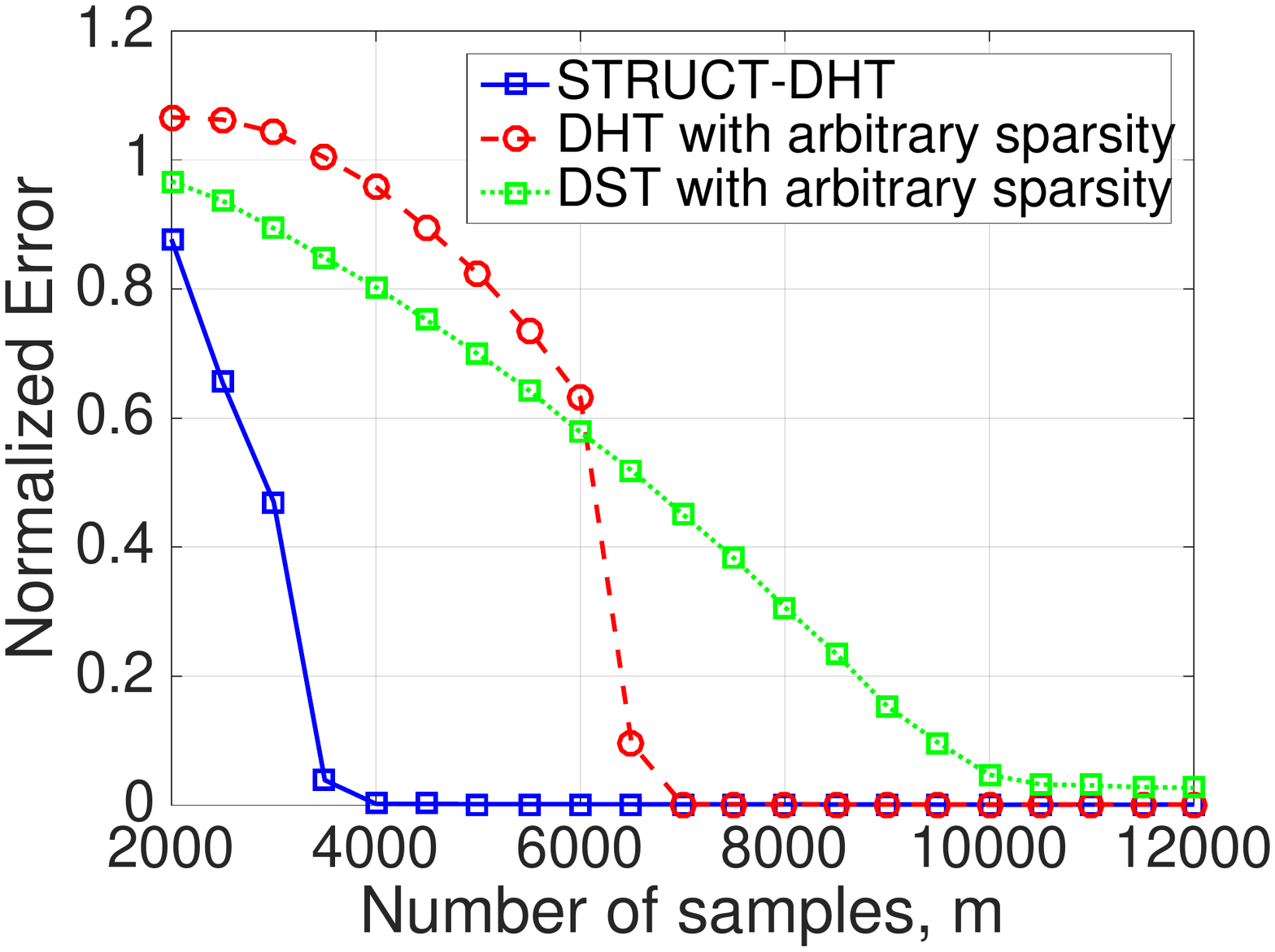}\\
(a) & (b) \\
\includegraphics[trim = 7mm 58mm 15mm 50mm, clip, width=0.48\linewidth]{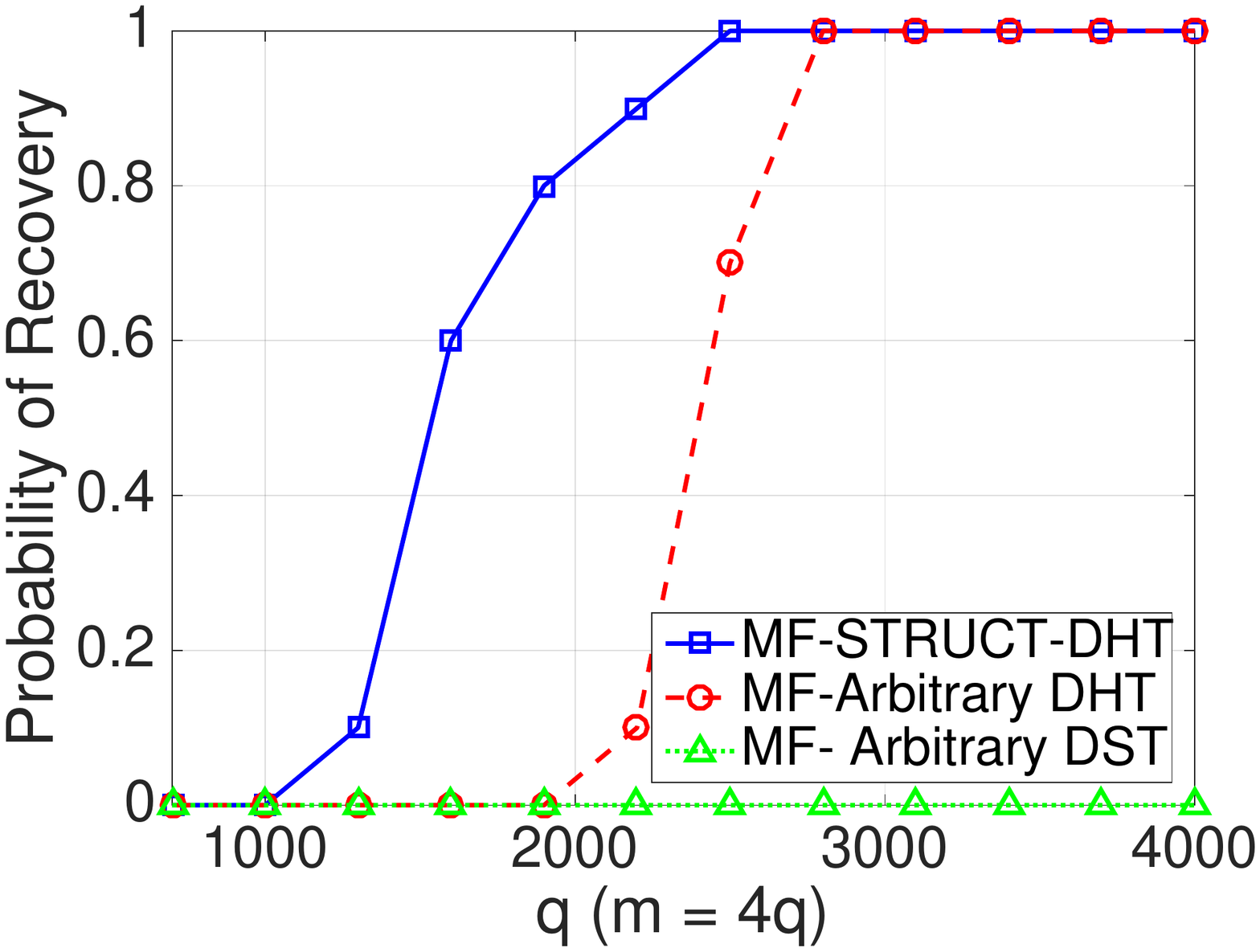} &
\includegraphics[trim = 7mm 58mm 15mm 50mm, clip, width=0.48\linewidth]{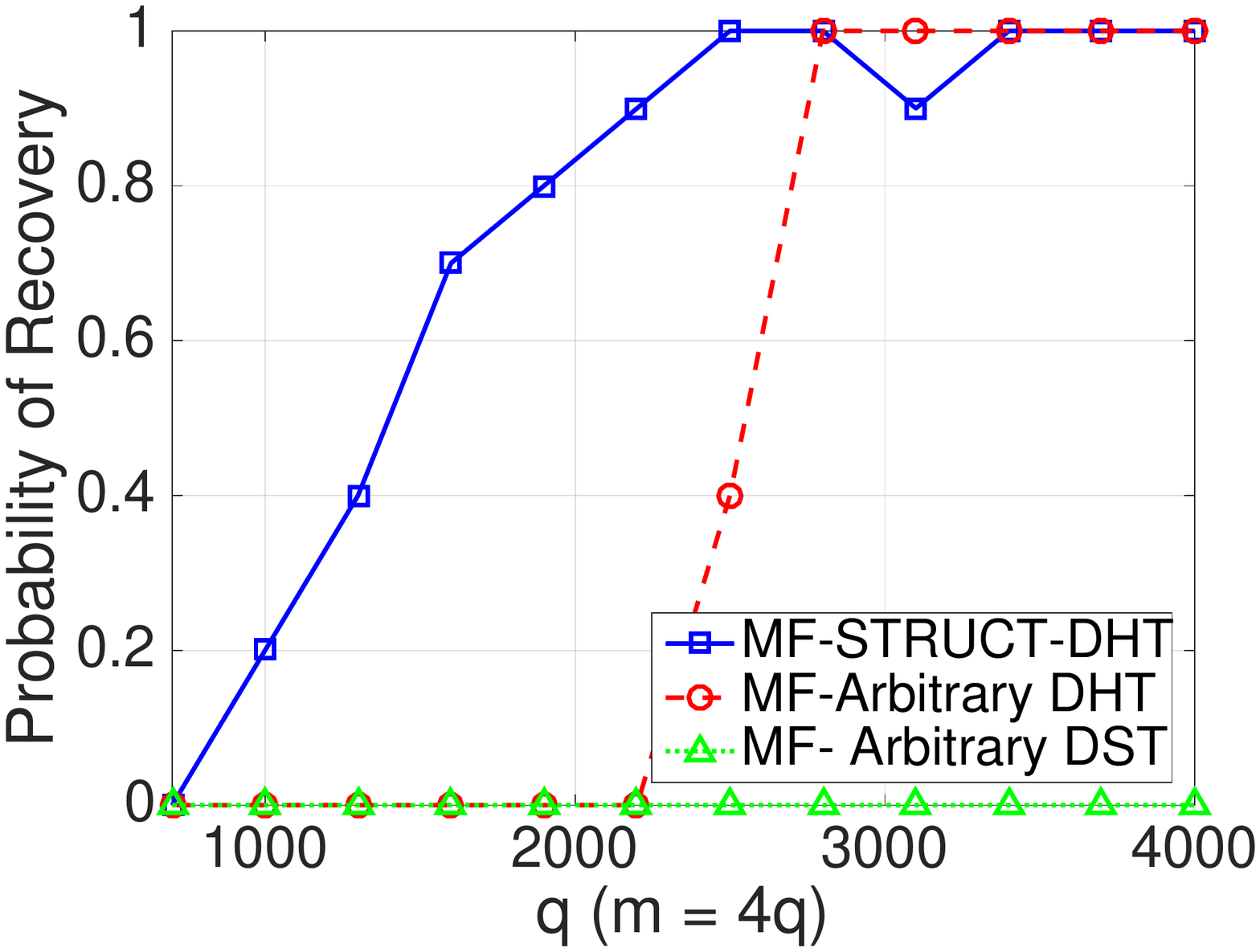}\\
(c) & (d)
\end{tabular}
\endgroup
\end{center}
\vskip  -0.2in
\caption{\small{\emph{Comparison of \textsc{DHT} and \textsc{MF} with structured sparsity with other algorithms. (a) and (b) Probability of recovery in terms of normalized error and Normalized error between $\widehat{\beta} =  \Phi \widehat{\theta_1} + \Psi \widehat{\theta_2}$ and true $\beta$, respectively for $g(x) = \frac{1-e^{-x}}{1 + e^{-x}}$. (c) and (d) Probability of recovery in terms of normalized error for $g(x) = \sin(x)$ and $g(x) = \mod(x)$, respectively. }}}
\label{fig:ComparisonSyn}
\end{figure}

\section{Conclusions}
In this paper, we addressed the problem of demixing from a set of limited nonlinear measurements in high dimensions. Specifically, we considered two nonlinearities: aperiodic and periodic link functions and the structured sparsity in the underlying signal components. For each of these nonlinearities, we proposed an algorithm and support them with sample complexity analysis. As a result of our proposed schemes, we showed that having structured sparsity assumption in the underlying components can significantly reduce the sample complexity compared to the case where we just have regular sparsity prior in these components. Finally, we verified our theoretical claims with some experimental results.  

%\newpage
\small
\bibliographystyle{IEEEbib.bst}
\bibliography{chinbiblio,csbib,mrsbiblio,kernels}

\end{document}